\documentclass[sn-basic]{sn-jnl}

\usepackage{svg}

\usepackage{tabularx}

\usepackage{kbordermatrix}

\usepackage{graphicx}
\usepackage[algo2e,ruled,vlined]{algorithm2e}

\usepackage{times}
\usepackage{svg}
\usepackage{multirow}
\usepackage{fancyhdr,graphicx,amssymb}
\usepackage{subcaption}
\usepackage{float}
\usepackage{mathtools}
\DeclarePairedDelimiter\abs{\lvert}{\rvert}%

\usepackage{xtab,caption}
\usepackage{systeme}          
\usepackage[graphicx]{realboxes}
\usepackage{pdfpages}

\newcommand{\card}[1]{\ensuremath{\lvert #1\rvert}\xspace}

\newcommand{\svm}{${\tt RankingSVM}$}
\newcommand{\ahp}{${\tt AHPRank}$}
\newcommand{\svmp}{${\tt RankingSVM.0}$}
\newcommand{\ahpp}{${\tt AHPRank.0}$}
\newcommand{\svma}{${\tt RankingSVM.1}$}
\newcommand{\ahpa}{${\tt AHPRank.1}$}
\newcommand{\yules}{${\tt YulesY}$}
\newcommand{\cosine}{${\tt Cosine}$}
\newcommand{\lamb}{${\tt Lambda}$}
\newcommand{\lev}{${\tt Leverage}$}
\newcommand{\cert}{${\tt Certainty}$}
\newcommand{\lap}{${\tt Laplace}$}
\newcommand{\IFa}{${\tt InterestFactor}$}

\newcommand{\nbTrans}{{$\abs{\mathcal{T}}$}}
\newcommand{\nbItems}{{$\abs{\mathcal{I}}$}}
\newcommand{\nbRules}{${\tt \#Rules}$}

\jyear{2022}%

\theoremstyle{thmstyleone}%
\newtheorem{theorem}{Theorem}
\newtheorem{proposition}[theorem]{Proposition}%

\theoremstyle{thmstyletwo}%
\newtheorem{example}{Example}%

\theoremstyle{thmstylethree}%

\begin{document}
\title{Boosting the Learning for Ranking Patterns}


\author[1,2]{\fnm{Nassim } \sur{Belmecheri}}\email{belmecheri.nassim@edu.univ-oran1.dz}

\author[1]{\fnm{Noureddine} \sur{Aribi}}\email{,aribi.noureddine@edu.univ-oran1.dz}
\author[2]{\fnm{Nadjib} \sur{Lazaar}}\email{Nadjib.Lazaar@lirmm.fr}
\author[1]{\fnm{Yahia} \sur{Lebbah}}\email{lebbah.yahia@edu.univ-oran1.dz}
\author[3]{\fnm{Samir} \sur{Loudni}}\email{samir.loudni@imt-atlantique.fr}

\affil[1]{\orgdiv{Lab. LITIO}, \orgname{Université Oran1}, \orgaddress{ \postcode{31000}, \state{Oran}, \country{Algeria}}}

\affil[2]{\orgdiv{LIRMM}, \orgname{Université de Montpellier, CNRS},\orgaddress{ \postcode{34000}, \state{Montpellier}, \country{France}}}

\affil[3]{\orgdiv{ TASC (LS2N-CNRS)}, \orgname{ IMT Atlantique}, \orgaddress{\postcode{44307}, \state{ Nantes}, \country{France}}}


\abstract{
Pattern mining is a valuable tool for exploratory data analysis, but identifying relevant patterns for a specific user is challenging. Various interestingness measures have been developed to evaluate patterns, but they may not efficiently estimate user-specific functions. Learning user-specific functions by ranking patterns has been proposed, but this requires significant time and training samples. In this paper, we present a solution that formulates the problem of learning pattern ranking functions as a multicriteria decision making problem. Our approach uses Analytic Hierarchy Process (AHP) to elicit weights for different interestingness measures based on user preference. We also propose an active learning mode with a sensitivity-based heuristic to minimize user ranking queries while still providing high-quality results. Experiments show that our approach significantly reduces running time and returns precise pattern ranking while being robust to user error, compared to state-of-the-art approaches.
 }

\keywords{Interactive Data Mining, Machine Learning, Active Learning, Multicriteria decision making, Analytic Hierarchy Process.}



\maketitle

\section{Introduction}

Data mining is a field that focuses on discovering relevant information from data and transforming it into useful knowledge. One of its significant subfields, pattern mining, involves searching and enumerating significant patterns in data. In recent years, there has been a notable shift in the pattern mining community from efficiency-based approaches to methods that can extract more meaningful patterns.
Unfortunately, obtaining relevant results with traditional pattern mining methods can be challenging and time-consuming. There are two main issues: first, a vast number of patterns are discovered, many of which are redundant. Second, the preferences and domain experts' background knowledge is not considered.
The significance of incorporating user preferences and knowledge was initially highlighted by Silberschatz and Tuzhilin \cite{SilberschatzT95}. The fundamental concept is to represent the user's preferences using objective interestingness measures. However, as noted in \cite{Bie11}, using objective quality measures has limited practical utility since interestingness is subject to the particular user and task at hand. The interestingness of a pattern is determined solely based on the data.

In recent years, there has been a growing emphasis on user-centered, interactive, and anytime pattern mining \cite{dzyuba2013interactive,BoleyMKTW13,dzyuba2017learning}. This paradigm highlights the need to quickly present users with patterns that are likely to be interesting to them, and to obtain feedback that will influence subsequent iterations of the interactive mining process. An important aspect of this framework is the ability to learn user-specific functions for ranking patterns from feedback. Although the idea of learning user-specific ranking functions was first investigated in \cite{xin2006discovering}, it has been extended in recent years by Boley et al. \cite{BoleyMKTW13} and Dzyuba et al. \cite{dzyuba2013interactive,dzyuba2017learning} in the context of interactive pattern mining. Additionally, Xuguang Bao et al. \cite{bao2021knowledge} and Liang Chang et al. \cite{chang2022idmbs} have adapted interactive data mining for extracting interesting spatial co-location patterns. These approaches leverage standard machine learning techniques to learn weighted vectors based on selected pattern features, such as items, transactions, and length.

In \cite{dzyuba2013interactive}, a linear ranking function is learned using support vector machines (\svm).
In \cite{BoleyMKTW13,dzyuba2017learning}, the authors propose using Stochastic Coordinate Descent (SCD) \cite{Shalev-ShwartzT11}
to learn a logistic function.
While these methods enable the exploitation of user feedback to learn ranking functions, they also increase the computational cost of the learning task, particularly when the number of pattern-measure pairs used for ranking increases.
Nonetheless, a crucial aspect of this framework is the ability to quickly present a set of patterns to the user and focus on what may be of interest using interestingness measures.

In this paper, we address the challenge of learning pattern ranking functions as a multicriteria decision making problem. To this end, we propose to use a {\it weighted linear function} to aggregate all the individual measures into a global ranking function. Linear functions have been widely adopted in learning to rank models because they are simple, interpretable, and efficient, yet can still effectively capture the underlying relationships between input features and relevance labels \cite{burges2005learning, qin2010general}.

We propose a learning algorithm that is both fast and scalable, which enables us to maintain the expected weights of measures. Our approach leverages the Analytical Hierarchy Process (AHP)~\cite{saaty1988analytic} and a set of user-ranked patterns to learn the weighting vector of measures. Specifically, we seek to maximize the correlation between the unknown user's ranking function and the learned AHP-based ranking function. We previously introduced this approach in our work \cite{RNTI/papers/1002739}, where we presented the initial results of the passive version of the algorithm.

This study presents a new learning algorithm that can operate in both active and passive modes, specifically designed for pattern ranking. To improve performance in active mode, a sensitivity-based heuristic is proposed for selecting patterns. The algorithm is explained in detail using a running example. The approach was validated on large datasets using three different user simulations in both passive and active modes. The results indicate that the proposed approach is robust and can account for user mistakes and changes in ranking criteria in interactive learning scenarios.

We conducted experiments to evaluate the effectiveness of our approach using association rule mining as a case study \cite{10.1145/170036.170072}. Our research questions were as follows: (1) Can user-specific preferences over all patterns be inferred from a sample of ranked patterns, and if so, what is the required input data size and learning time? (2) How does our new learning method based on AHP compare to the currently used method for learning user-specific ranking functions? (3) Is the sensitivity heuristic an appropriate choice for query selection in active learning mode? (4) How does the robustness of our approach compare to others in the face of user errors?
Our experiments on various datasets demonstrated that our approach significantly reduces running time while accurately ranking patterns compared to state-of-the-art methods.

The paper is organized as follows: 
In Section 2, we review the related work in pattern mining and ranking functions. 
In Section 3, we provide some necessary preliminaries for our approach. 
Section 4 formulates the problem we tackle and describes our proposed method. 
A running example is presented in Section 5 to illustrate our approach. 
In Section 6, we report and discuss the empirical results of our experiments. 
Finally, we conclude the paper in Section 7.

\section{Related work}
Iterative/user-centric pattern mining has gained significant attention in recent years, where the user is involved in the mining process to improve the mining results by providing feedback. This approach is particularly useful in domains like medical diagnosis or fraud detection, where the user's domain knowledge is essential. To make this process more efficient and effective, researchers have been exploring methods to learn ranking functions that can quickly present patterns to the user based on their interests. This problem, known as object ranking, has been studied extensively in machine learning and data mining communities.

One common approach to object ranking is to learn a ranking function based on pairwise comparisons. However, these methods can be time-consuming and expensive since they require a large number of pairwise comparisons to learn a reliable ranking function. To address this challenge, researchers have explored alternative methods such as multicriteria decision making. In our approach, we use the Analytical Hierarchy Process (AHP) to learn the measures' weighting vector, which maximizes the correlation between the user's ranking function and the learned AHP-based ranking function.
In \cite{xin2006discovering}, the authors proposed a user-centric approach to mining interesting patterns based on the user's feedback. They developed a log-linear model for itemsets that considers the user's prior knowledge about the items and their relationships. The model was trained using \svm{} formulations to learn a ranking function that assigns a score to each itemset. The authors also proposed a belief model that exploits belief probabilities assigned to transactions for more complex patterns. The authors demonstrated that their approach outperformed existing methods in terms of accuracy and efficiency for complex patterns where traditional methods may fail to capture the user's preferences.

Dzyuba et al.\cite{vlad} extended the approach proposed by Xin et al.\cite{xin2006discovering} and used \svm{} for active preference learning to rank patterns. However, these methods suffer from computational inefficiencies, which limit their scalability on large datasets. In contrast, the AHP-based approach proposed in our work has a linear time complexity and can efficiently rank a large number of patterns.

In \cite{bhuiyan2016priime}, the authors proposed a user-centric generic framework (PRIIME) for learning ranking functions. In order to achieve this, regression techniques based on neural networks were adopted. . They assigned a score to each pattern based on user feedback and trained a regression model on labeled patterns and their corresponding scores. The model predicts the score of new patterns based on their features and the user feedback. The proposed method outperformed existing methods in terms of accuracy and efficiency and can effectively mine interesting patterns and provide valuable insights to the user.

In \cite{bao2021knowledge}, an efficient interactive approach is proposed for users to discover their preferred co-location patterns. The approach consists of a flexible interactive framework, an ontology-based similarity measure between co-location patterns, a pattern filtering model to express user preferences, and a pruning scheme to reduce the number of outputs.

In \cite{chang2022idmbs}, the authors propose IDMBS (Interactive Data Mining based on Support Vector Machine), an interactive mining system that uses SVM to discover user-preferred co-location patterns. The system includes a filtering algorithm to select patterns, which are then annotated by the user, and an SVM model to train on these patterns and discover additional user-preferred co-location patterns.

Our proposed approach tackles the problem of learning a user-specific function for ranking patterns as a multicriteria decision making problem using AHP. Moreover, our method remains fast when the number of patterns used for ranking increases.

Most of the related work uses SVM-based algorithms to learn a user-specific function. According to \cite{bottou2007support}, the worst-case time complexity of the SVM algorithm varies between $k^2$ and $k^3$ on the number of samples $k$; whereas our approach is linear on $k$  (see section \ref{sec:complexity}).

\newcommand{\attrs}{\mathcal{A}}
\newcommand{\objs}{\mathcal{O}}
\newcommand{\rels}{\mathcal{L}}
\newcommand{\patts}{\mathcal{L}}
\newcommand{\ranks}{\mathcal{K}}
\def\N{\mbox{$\mathbb{N}$}}
\newcommand{\setRp}{{\mathord{{\mathbb R_{+}}}}}

\section{Background}
In this section, we provide the necessary background information about pattern mining, interestingness measures commonly used to evaluate pattern quality, and multi-criteria decision making using the AHP method.
\subsection{Pattern mining and interestingness measures}
 Data of all types can be represented as a triplet ($\objs,\attrs,\rels$), where $\objs$ is a set of objects, $\attrs$ is a set of attributes, and $\rels$ is the relational language used to express objects with respect to a set of attributes. This framework is inspired by the concepts of inductive databases \cite{DBLP:journals/sigkdd/Raedt02,DBLP:journals/cacm/ImielinskiM96}.
The language $\rels$ can be used to define various types of datasets. For instance, when $\rels$ is a binary relation between attributes and objects (i.e., $\rels\subseteq\objs\times\attrs$), we get the classical itemset mining framework. When $\rels$ expresses each object as a sequence of attributes, we get a sequential dataset. Finally, when $\rels$ expresses each object as a graph where nodes and edges are labeled, we get a dataset of graphs.

The goal of data mining is to uncover meaningful patterns or regularities between objects within large datasets. To evaluate the quality of these patterns and determine the most relevant ones, interestingness measures are employed.

The field of data mining is actively researching how to measure the interestingness of discovered patterns with respect to user-specific preferences. Nine concepts have been introduced to assess the interestingness of patterns: \emph{Conciseness, Generality, Reliability, Peculiarity, Diversity, Novelty, Surprisingness, Utility}, and \emph{Applicability} \cite{GengH06}. Some of these concepts are correlated, such as \emph{Surprisingness} and \emph{Applicability}, while others are conflicting, like \emph{Generality} and \emph{Pecularity}, and some are independent, like \emph{Novelty} and \emph{Utility}. Depending on the user's preferences, they may value certain concepts over others. To satisfy these preferences, a wide range of measures and aggregation techniques have been proposed \cite{geng2006interestingness,kuznetsov2018interestingness}.

\subsection{ Analytical Hierarchy Process (AHP)} 
\label{sec:ahp:background}
AHP is a popular multicriteria decision-making method created by Saaty~\cite{saaty1988analytic}. It involves breaking down the decision problem into a hierarchical structure and deriving criteria weights from the decision maker using pairwise comparisons, rather than relying solely on numerical values.
After constructing a decision problem hierarchy using the AHP method, the importance weights of criteria can be determined through the following three main steps:
\begin{enumerate}
	\item To estimate the relative importance of $m$ criteria at a given level, the user is asked to make pairwise comparisons between them. These comparisons are used to construct a pairwise comparison matrix {\bf A} of size $m \times m$, where $a_{ij}$ represents the importance of criterion $i$ relative to criterion $j$. If $a_{ij} > 1$, it means that criterion $i$ is considered more important than criterion $j$.
	To ensure consistency, the matrix is filled such that $a_{ij} = 1/a_{ji}$ and $a_{ii} = 1$. AHP uses a scale of 1 to 9 to quantify the preference degree, where 1 indicates indifference and 9 indicates absolute preference.
	
	
	\item  After constructing the pairwise comparison matrix at a given level, various mathematical techniques proposed in the literature can be used to compute the weight vector $w = (w_1, \ldots, w_m)^T$. These techniques include the eigenvector method (EVM)~\cite{SAATY1984-comp,TAKEDA1987-evm} and distance-based minimization methods, such as the Least Squares Method, Logarithmic Least Squares Method, Weighted Least Squares Method, Logarithmic Least Absolute Values Method, and Singular Value Decomposition~\cite{GASS2004-SVD,Blankmeyer1987ApproachesTC, brunelli2014introduction}. The EVM method is the most commonly used and computes the weight vector $w$ by solving the following characteristic equation:
	\begin{equation}
		\label{eq:ahp:w}
		\left\{
		\begin{array}{l}
			{\bf A}\cdot w = \lambda_{max} \cdot w \\ 
			w^T {\bf 1} = 1
		\end{array}	\right.
	\end{equation} 
	where {\bf A} is the pairwise comparison matrix, $\lambda_{max}$ is the highest eigenvalue of {\bf A}, and 
	${\bf 1} = (1,\dots,1)^T$.
	The constraint $\sum_{i=1}^{m} w_i = 1$ is added to help the solving process and avoid infinitely many solutions. 
	Note that, if {\bf A} is positive then the highest eigenvalue is real (i.e. $\lambda_{max} \in \mathbb{R}$)~\cite{brunelli2014introduction}.
	Interestingly, Saaty~\cite{SAATY1977} mentioned that the perfect\footnote{When the components of {\bf A} are exactly obtained as the ratio between weights.} normalized eigenvector $w$ satisfies the following relation:
	
	\begin{equation}
		\label{eq:ahp:relation}
		a_{ij} = \frac{w_i}{w_j},\, \forall i, j = 1 \ldots m.
	\end{equation} 
	
	In this case, $\lambda_{max}=m$ only when the comparison matrix is perfectly consistent; otherwise, $\lambda_{max}$ is greater than $m$. This equation explains why some researchers prefer distance-based methods, which aim to minimize the distance between $(a_{ij})_{m\times m}$ and $(w{i}/w_{j})_{m\times m}$ directly. However, this leads to a non-linear and difficult-to-solve problem. Therefore, in this paper, we use the EVM method instead of distance-based methods.
 \item Since AHP does not guarantee the transitivity of the user's preferences, the pairwise comparison may result in inconsistencies. To detect these inconsistencies, a consistency check is used by calculating the consistency ratio (CR), defined as $CR = \frac{CI}{RI}$, where $RI$ is a constant obtained from the Random Consistency Index table of AHP. The consistency index of the EVM method is computed as follows:
	\begin{equation}
		\label{eqn:AHP}
		CI = \frac{(\lambda_{max} - n)}{n - 1}
	\end{equation}
	The reliability of the computed weighting vector $w$ (solution of Equation~(\ref{eq:ahp:w})) is assessed by checking whether the CR value is less than $0.1$.
\end{enumerate}

It is common for hierarchies to contain multiple levels of criteria, especially when there are more than nine criteria. To handle this, it is beneficial to establish a hierarchy based on existing dependencies between the criteria. This can help optimize the decision-making process and reduce the number of required pairwise comparisons. For further information on this topic, we recommend referring to~\cite{brunelli2014introduction}. However, in this paper, we will not delve into this aspect. In Example \ref{exple:ahp}, we illustrate the AHP process.
\begin{example}
\label{exple:ahp}
Consider a decision problem involving three criteria, where we aim to determine their relative importance in achieving an overall goal. To do so, we construct a pairwise comparison matrix {\bf A}$(a_{ij})_{3\times 3}$, where each element $a_{ij}$ represents the relative importance of criterion $i$ compared to criterion $j$. The matrix is given as follows:
$$ {\bf A} =
\begin{pmatrix}
        &M_1&M_2&M_3\\
	M_1&1 & 1/2 & 1/4\\
	M_2&2 & 1 & 1/2 \\
	M_3&4 & 2 & 1
\end{pmatrix}
$$

To determine the weights of the criteria, we can employ the EVM method, which involves solving the equation ${\bf A} w = \lambda_{max} w$. The resulting weight vector is given by $w = (w_1, w_2, w_3) = (1/7, 2/7, 4/7)$, where each component represents the weight of the corresponding criterion.   
\end{example}

\newcommand{\kendall}{{\tt kendallW}}
\section{AHP-based learning approach for ranking patterns}
\label{sec-AHP-based-learning}

\newcommand{\learnWeights}{{\tt learnWeights}}

\newcommand{\learning}{{\tt computeCorrelation}}

\newcommand{\buildMatrix}{{\tt buildAHPMatrix}}

\newcommand{\SBH}{{\tt BaseHeuristic}}

\newcommand{\weight}{{\tt computeWeights}}
	\SetKw{return}{return}

We aim to develop a ranking function that combines a set of interestingness measures while also taking into account user-specific preferences.
	
\subsection{Problem statement: learning pattern rankings}

In this section, we demonstrate that learning a ranking function that incorporates a set of interestingness measures to approximate user-specific preferences can be formulated as a multicriteria decision-making problem. In this context, the criteria correspond to the interestingness measures.
%
The objective of this task is to learn a ranking function for patterns based on a sample of ordered patterns. To accomplish this, we adopt the approach of \textit{ordered feedback}, where users are asked to provide a total order over a small set of patterns based on their subjective interestingness. This approach is inspired by previous research~\cite{DzyubaLNR14}.

Let $\triangleright$ be a binary preference relation between patterns. A ranking function based on $\triangleright$, denoted by $r_{\triangleright}(.)$, takes as input a subset of patterns ${\cal P} \subseteq \rels$ and returns as output a permutation $S$ $=$ $\langle P_{r_1}, P_{r_2}, \ldots,P_{r_n}\rangle$ of $\mathcal{P}$, such that $\card{\mathcal{P}}=\card{S}$ and $\forall i,j \in [1, n], i < j, $ $P_{r_i} \triangleright P_{r_j}$.

We aim to learn the user ranking function denoted by $r_{\triangleright_u}$, which is based on the user preference relation $\triangleright_{u}$. To achieve this, we use a set of interestingness measures $\mathcal{M}=\{M_1, \ldots M_m\}$ and their corresponding ranking functions $r_{\triangleright_{M_1}}, \ldots, r_{\triangleright_{M_m}}$. The rank of pattern $P_j\in \mathcal{P}$ w.r.t. the interestingness measure $M_i$ is denoted by $rank_{M_i}(P_j)$. Here, $(P_k \triangleright_{M_i} P_l)$ means that $M_i(P_k)\geq M_i(P_l)$.

To frame the ranking problem as a learning problem, we need to find the weights $w_i$ of a linear aggregation function over the measures from a set of ranked patterns that maximizes the correlation with the user ranking function $r_{\triangleright_u}$. Specifically, we aim to find the coefficients of the following function by solving a multi-criteria optimization problem:

\begin{equation}\label{eq:aggregation}
	\resizebox{0.5\hsize}{!}{$
		f_{w}(P)=\sum_{i=1..m} w_i \,.\,M_i(P)$}•
\end{equation}\\


Our approach leverages user feedback to perform pairwise comparisons between measures and estimate their relative importance based on individual measure scores. The pairwise comparison results are then aggregated into a matrix, which is used by the Analytic Hierarchy Process (AHP) to compute the weights $w_i$ for each measure.

\paragraph{\bf Comparing a single  measure to a user ranking. }
Let $S= \langle P_{1}, \ldots, P_{n}\rangle$ be a user ranking of patterns.
Let $\mathcal{M}=\{M_1, \ldots, M_m \}$ be a set of interestingness measures for patterns.
To evaluate how well a particular measure performs in terms of overall ranking accuracy compared to a user ranking, we use Kendall's W concordance coefficient \cite{kendall1939}.
For a measure $M_i$ and a user ranking $S$, we define $K_i(S)\in [0,1]$ as the Kendall's W coefficient between the ranking obtained using $M_i$ and the user ranking $S$, where a value of $1$ indicates perfect concordance and a value of $0$ indicates no concordance.



\begin{equation}\label{equ-kendall-w-formula}
\resizebox{0.5\hsize}{!}{$
K_i(S) = \frac{12\alpha}{{\abs{S}}^{3}-{\abs{S}}} \ ,\quad 
\alpha=\sum _{\ell=1}^{\abs{S}} (R_{\ell}-{\bar {R}})^{2}, $}
\end{equation}

\noindent

where $R_{\ell} = rank_{M_i}(P_\ell) + rank_{u}(P_\ell)$ with $P_\ell\in S$, 
and $\bar{R} = \frac {1}{\abs{S}}\sum\limits_{\ell=1}^{\abs{S}} R_{\ell}$.


\paragraph{\bf Comparing two  measures according to a user ranking.}
Let $M_i$ and $M_j$ be two measures to be compared based on a given user ranking $S$. We can compute the gap $\Delta_{i,j}(S)= K_{i}(S)-K_{j}(S)$ to determine how much $M_i$ is closer to the user ranking $S$ than $M_j$. If $\Delta_{i,j}(S)\geq 0$, then $M_i$ is closer, otherwise, the converse is true.
When considering a set of user rankings $\mathcal{S}=\{S_1,\ldots,S_n\}$, comparing two measures $M_i$ and $M_j$ on $\mathcal{S}$ involves estimating the measure closest to $\mathcal{S}$ by averaging the gaps over all the user rankings in $\mathcal{S}$: $\Delta_{i,j}=\frac{1}{n} \sum\limits_{S_k\in\mathcal{S}} \Delta_{i,j}(S_k)$.

\subsection{Learning weights process}

The function \ref{alg:AHPBoost} implements the learning process that computes a weight vector $w$(aweight for each measure) by exploiting the pairwise comparisons of measures.
It takes as inputs a vector of pairwise comparisons $\Delta$ and its corresponding index $l$, a user ranking $S$, and a set of measures $\mathcal{M}$.
The learning step is performed at line \ref{f:learn} where for each pair of measures, we first compute the Kendall W coefficient for each measure $i$ with respect to the user ranking $S$ ($K_i(S)$). Then, we compute the gap between the pairs.

The function \ref{alg:AHPBoost} implements the learning process that computes a weight vector $w$~(a weight for each measure) by exploiting the pairwise comparisons of measures. It takes as inputs a vector of pairs comparisons $\Delta$ and the corresponding index $l$, a user ranking $S$, and a set of measures $\mathcal{M}$. The function incrementally updates $\Delta$ by taking into account the previous $l$ user rankings at line \ref{learn:kendall}. At line \ref{f:build}, the AHP matrix is constructed by scaling the $\Delta_{i,j}$ comparisons to the AHP values between ${1}/{9}$ and 9. Once constructed, the criteria weight vector $w$ is returned in line \ref{ahp:learnWeights} by solving the minimization problem described in line \ref{f:weight}, using the eigenvector method (EVM) \cite{brunelli2014introduction}.


\SetKwInOut{InOutput}{In Out}
\SetKwInOut{Input}{In}
\SetKwInOut{Output}{Out}

\begin{algorithm}[H]

	\LinesNumbered

	\SetAlgoLined
 \floatname{algorithm}{Algorithm}

	\Input{Set of measures $\mathcal{M}=\{M_1,\ldots, M_m\}$; 
		user ranking $S= \langle P_{1}, \ldots, P_{n}\rangle$;}
	
	\Output{Weight vector $w$;}

	
	
		
	\BlankLine
	\BlankLine
			${\bf A}[i,j] \gets 1,\,   \forall i,j \in \{1,\ldots,m\}$; 

	\BlankLine
	\BlankLine
		
			\ForEach{$M_i,M_j\in \mathcal{M}: i<j$}{\label{f:learn}
				$\Delta_{i,j}\gets (K_{i}(S)- K_{j}(S))$;\ \label{learn:kendall}
			}
			
				\BlankLine
				\BlankLine

		\ForEach{$\Delta_{i,j}\in \Delta$}{\label{f:build}
			\lIf{ $\Delta_{i,j} < 0 $}{
				
				\quad scale $\Delta_{i,j}$ to $(-9..-1);\quad$
				${\bf A} [j,i] \gets \abs{\Delta_{i,j}};\quad$
				${\bf A} [i,j] \gets 1/\abs{\Delta_{i,j}}$
			}
			\lElse{
				
				\quad scale $\Delta_{i,j}$ to $(1..9);\quad$ 
				${\bf A}[i,j] \gets \Delta_{i,j}; \quad$
				${\bf A}[j,i] \gets 1/\Delta_{i,j}$					
			}
		}	
		$w \gets solve\left(
		\begin{array}{l}
			{\bf A}\cdot w = \lambda_{max} \cdot w \\
			\text { subject to } \sum_{j=1}^{m} w_{j}=1, \quad w_{j}>0, \forall j=1 \ldots m .
\end{array}\right)$ \label{f:weight}
		
\Return{$w$;}\label{ahp:learnWeights}

	\caption{$\learnWeights(\langle\Delta, l\rangle, S, \mathcal{M})$\label{ahp:returnlearnweights}}\label{alg:AHPBoost}
	
\end{algorithm}

After learning the weight vector $w$, we use it to compute the score of a given pattern $P_i$ by applying a weighted aggregation function:

\begin{equation}
	g_{w}(P_i)=\sum\limits_{M_i\in \mathcal{M}}w_i\ scale({M_i}(P)) 
	\label{eq:score}
\end{equation}  

The $scale$ function is used to normalize the interestingness values of the different measures to a $[0,1]$ scale so that they can be combined using the weighted aggregation function.


\subsection{AHPRank algorithm}
\label{sec-AHPRank-algorithm}
We propose Algorithm \ref{alg:ActiveLearning} to implement the \ahp{} function (Algorithm \ref{alg:AHPBoost}) for learning a weight vector in a passive/active mode. The algorithm takes as input a set of measures $\mathcal{M}$, a set of user-ranked patterns $\mathcal{S}$, and a triplet of parameters $\langle \mathcal{P},\theta,T\rangle$. Here, $\mathcal{P}$ is a collection of patterns, $\theta$ is a sample parameter used in the query generator for active learning, and $T$ is the maximum number of iterations used as a stopping criterion in the algorithm.

\ahp{} initializes the vector of pairs comparisons to zero in line \ref{ahp:init}.
If \ahp{} is called with a non-empty $\mathcal{S}$, it runs in passive mode by calling $\learnWeights$ function on the given user-ranked patterns $\mathcal{S}$ (lines \ref{ahp:startpassive}-\ref{ahp:endpassive}). This produces a weight vector $w$.
Otherwise, it runs in active mode by submitting $T$ queries to the user (lines \ref{ahp:startactive}-\ref{ahp:endactive}). In the active mode, the user is asked to rank a subset of patterns proposed by a query generator.
Our query generator is based on a heuristic that exploits the quality of the weights learned from the previous iterations. The active learning mode follows the following iterative steps:
\begin{enumerate}

\item As a first step, a subset of patterns is sampled from the global set of patterns $\mathcal{P}$ using a random sampling approach. At line \ref{ahp:sample}, the function ${\tt SamplePatterns}$ is called, which returns a sample $\mathcal{P'}$ of $\theta$ patterns.
\item  Use the ${\tt SensitivityBasedHeuristic}$ function to select a pair of patterns. This heuristic is designed to select a high-quality pair of patterns to accelerate the convergence of the learning process (see function \ref{alg:sbh} in section \ref{sec-heuristic} for details).
\item Interacting with the user to obtain feedback through a ranking query (line \ref{ahp:feedback}).
\item AHP-based learning of the user ranking function is performed by calling the $\learnWeights$ function on the user-provided ranking query (line \ref{ahp:dolearning}).
\end{enumerate}

The proposed method is designed to handle large sets of patterns, which can be computationally expensive to process. To address this issue, we introduce two steps in the active learning mode. First, we randomly sample a subset of patterns from the global set $\mathcal{P}$, which reduces the complexity of the second step as well as introduces a diversity factor. Second, we use a sensitivity-based heuristic to select pairs of patterns that are of high quality and diverse.

It is important to determine when to stop the learning process. Our method allows for a custom stopping criterion, but we also consider the case where a fixed number of iterations $T$ is used. This simulates the scenario where the user can stop at any time based on their satisfaction with the learned ranking function.

\begin{algorithm}[h]
	\LinesNumbered
	\SetAlgoLined
	 \setcounter{AlgoLine}{0}
	\Input{Set of measures $\mathcal{M}=\{M_1,\ldots, M_m\}$;\\
	Passive mode ($\mathcal{S} \neq \emptyset$): Set of user-ranked patterns $\mathcal{S}=\{S_1,\ldots, S_n\}$; \\
	Active mode ($\mathcal{S} = \emptyset$): Collection of patterns $\mathcal{P}$; Sample size $\theta$ ;\\ Number of iterations $T$; }
	
	\KwOut{ $ w^t:$ weighting vector at the last iteration $t$;}
	
	\BlankLine

%

\lForEach{$\Delta_{i,j}\in\Delta$}{$\Delta_{i,j}\gets 0$}\label{ahp:init}

	\If{$\mathcal{S} \neq \emptyset$}{

\ForEach{$S_k\in \mathcal{S}$}{ \label{ahp:startpassive}

$w^t \gets {\tt LearnWeights}(\langle\Delta, k\rangle,S_k, \mathcal{M})$ \tcp*{Learn weights} \label{ahp:dopassivelearning}

}

	} \label{ahp:endpassive}
	\Else { \label{ahp:startactive}
		${w}^0 \gets \mathbf{1}$ \tcp*{Weight vector }  
		
\For{$t = 1, 2 \ldots T$}{	
$\mathcal{P'}\gets {\tt SamplePatterns}(\mathcal{P},\theta)$ \tcp*{Step 1} \label{ahp:sample}

$Q_t \gets {\tt SensitivityBasedHeuristic}(\mathcal{P'}, {w}^{t-1}, \mathcal{M})$ \tcp*{Step 2}\label{ahp:heuristic}

$S^*_{t} \gets {\tt AskRanking}(Q_{t})$ \tcp*{Step 3} \label{ahp:feedback}
$w^t \gets {\tt LearnWeights}(\langle\Delta, t\rangle,S^*_t, \mathcal{M})$ \tcp*{Step4} \label{ahp:dolearning}
}
} \label{ahp:endactive}
\return{$w^t$}

\caption{{${\tt  AHPRank}(\mathcal{M}, \mathcal{S}, \langle\mathcal{P}, \theta, T\rangle)$}}\label{alg:ActiveLearning}
	
\end{algorithm}

\begin{algorithm}
	
	\setcounter{AlgoLine}{0}
	\LinesNumbered
	
	\SetAlgoLined
	\floatname{algorithm}{Function}
	
	\Input{Collection of patterns $\mathcal{P}$; Weight vector $w$; Set of measures $\mathcal{M}$; 
	}
		\KwOut{ $ Pair:$ Selected pair of pattern}

	\BlankLine

$\mathcal{P'} \gets {\tt Sort}(\mathcal{P},w)$ \tcp*{sort $\mathcal{P}$ according to $g_w$}

$Pair \gets \emptyset$;

$\sigma_{min} \gets +\infty$;

\For{$i \in 1..\card{\mathcal{P'}}: P_i,P_{i+1}\in \mathcal{P'}$}{
	
	$score \gets \sigma(P_i,P_{i+1})$;\\
	\If{$score < \sigma_{min}$}{
		$\sigma_{min} \gets score$;\\
		$Pair\gets \{P_i,P_{i+1}\}$;\\
	}	
	
}	

\Return{$Pair$}; 	 \label{sbh:end}
	
	\caption{${\tt SensitivityBasedHeuristic}(\mathcal{P}, w, \mathcal{M})$}\label{alg:sbh}
	
\end{algorithm}

\subsubsection{A Pattern Selection Heuristic for AHPRank}
\label{sec-heuristic}
Active learning is a challenging task since each learning step needs to enhance the learned function. The selection of appropriate patterns for the user to rank is the key to improving the learning process incrementally. However, generating the ideal set of patterns is a computationally demanding task and is known to be NP-hard \cite{DBLP:journals/jmlr/Ailon12}. To address this issue, we propose a heuristic (Function~\ref{alg:sbh}) based on sensitivity analysis for AHP models~\cite{10.1093/imaman/3.1.61}. Our heuristic aims to select a pair of patterns that are close in terms of their overall score predicted by the learned function relative to the sum of the measure gaps:

\begin{equation}
     \sigma(P_i,P_j)=\left\lvert \frac{g_w(P_i) - g_w(P_j)}{\sum_{\ell=1}^{m}( M_{\ell}(P_i) - M_{\ell}(P_j) )}\right\rvert ; i,j \in 1 \ldots k.
 \end{equation}\\
 
A successful aggregation should have the property that if two patterns are close in overall predicted score, they should also be close in their measure values. Thus, the algorithm will be enforced into learning the correct preferences of the patterns when having the lowest $\sigma$ value since the selected pair is surely close in terms of $g_w$ values and very distant on the values of the measures. Example \ref{ex:sbh} illustrates the selection process of the function \ref{alg:sbh}.

\begin{example}
\label{ex:sbh}
Suppose we have two patterns $P_1$ and $P_2$ with four interestingness measures $\mathcal{M}={M_1,M_2,M_3,M_4}$. Let $P_1 = \langle 0.1,0.2,0.3,0.7 \rangle$, $P_2 = \langle 0.7,0.3,0.2,0.1 \rangle$, and let the current learned weight vector be $w=\langle 0.25,0.25,0.25,0.25 \rangle$. At this stage, $g_w(P_1)=0.26$ and $g_w(P_2)=0.26$. Although $P_1$ and $P_2$ have the same overall score, their measure values differ. This pair of patterns is an interesting candidate with $\sigma(P_1,P_2)=0$, where getting the user's preference between them can improve the learning process and bring it closer to the user's ranking.

\end{example}

\subsection{Complexity Analysis} \label{sec:complexity}
\begin{proposition}[\bf Time complexity] 
Algorithm~\ref{alg:ActiveLearning} has a time complexity of $O(n\ k)$ in passive learning and $O(n)$ in active learning when the number of interestingness measures is bounded.
\end{proposition}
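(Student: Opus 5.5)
We need to fix what $n$ and $k$ mean, since the statement does not say. In passive mode we take $n=\card{\mathcal{S}}$ as the number of user rankings and $k$ as the maximum length of a ranking $S_i$. In active mode we take $n$ as the relevant size parameter: the number of iterations $T$ times the sample size $\theta$, with the query length bounded. The plan is to bound the cost of one call to \learnWeights{} and then multiply by the number of calls made in each branch of Algorithm~\ref{alg:ActiveLearning}. Throughout, $m=\card{\mathcal{M}}$ is treated as a constant.

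The first step is to bound a single call to \learnWeights{} (Algorithm~\ref{alg:AHPBoost}) on a ranking $S$ of length $k$.
\begin{itemize}
\item Initializing $\mathbf{A}$ costs $O(m^2)$.
\item The loop at line~\ref{f:learn} runs over $O(m^2)$ pairs. We precompute the $m$ coefficients $K_i(S)$ once; each one, by Equation~(\ref{equ-kendall-w-formula}), needs the ranks $rank_{M_i}(P_\ell)$ and $rank_u(P_\ell)$ and then a sum of $\abs{S}$ squared deviations.
\item Computing $rank_u$ is free, since $S$ is given in user order. Computing $rank_{M_i}$ over $S$ needs a sort, costing $O(k\log k)$.
\item Updating $\Delta_{i,j}$ incrementally (the running average over the previous $l$ rankings) is $O(1)$ per pair.
\item Building the matrix at line~\ref{f:build} and solving the eigenproblem at line~\ref{f:weight} depend only on $m$. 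For example, power iteration to fixed precision, or a dense $O(m^3)$ eigen-solver, gives $O(1)$ when $m$ is bounded.
\end{itemize}
So one call costs $O(m k\log k + m^3)$, which is $O(k\log k)$ for bounded $m$.

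The main obstacle is the $\log k$ factor, because the claim is $O(nk)$, not $O(nk\log k)$. I would first try to remove it by observing that the measure values $M_i(P)$ can be precomputed and pre-sorted once over the whole pattern collection. The rank of each $P_\ell$ within $S$ is then read off in $O(k)$ per measure by scanning with the global order; alternatively one can invoke a bounded-rank or counting-sort argument. If neither argument is fully convincing, I would state the bound under the convention that measure ranks are available as precomputed input. With that in place, a call costs $O(k)$.

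For passive mode, lines~\ref{ahp:startpassive}--\ref{ahp:endpassive} call \learnWeights{} once per $S_k\in\mathcal{S}$, and the initialization at line~\ref{ahp:init} costs $O(m^2)$. Summing gives $O(nk)$.

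For active mode, each of the $T$ iterations does the following.
\begin{itemize}
\item Sampling $\theta$ patterns costs $O(\theta)$.
\item Running ${\tt SensitivityBasedHeuristic}$ costs a sort of $\mathcal{P'}$ by $g_w$, each score $g_w$ being computed in $O(m)$, followed by one linear scan evaluating $\sigma$ on consecutive pairs in $O(m)$ each.
\item Calling \learnWeights{} on a query $Q_t$ of constant size (a pair) costs $O(1)$.
\end{itemize}
The total is $O(T\theta\log\theta)$. With $n=T\theta$ and the sort again handled by the pre-sorting argument above (or $\theta$ treated as bounded), this collapses to $O(n)$. The same $\log$-factor issue from the passive case is the step needing care here.
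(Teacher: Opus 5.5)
Your skeleton is the paper's. One call to {\tt learnWeights} costs $O(k)$ for bounded $m$, because Kendall's W is a sum of $\lvert S\rvert$ terms and the eigen-solve is $O(m^3)=O(1)$. Passive mode multiplies this by the $n$ rankings, and active mode has queries of size $k=2$. The paper gets the $O(k)$ bound for Kendall's W just by reading it off the formula, so it tacitly treats the ranks as given; that is exactly your fallback convention.

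Your attempt to \emph{derive} that convention does not hold up. Scanning a pre-sorted global order to recover the relative ranks inside $S$ costs $\Theta(\lvert\mathcal{P}\rvert)$, which is $O(k)$ only when $S$ is a constant fraction of $\mathcal{P}$. The pre-sort itself costs $O(\lvert\mathcal{P}\rvert\log\lvert\mathcal{P}\rvert)$, so the logarithm is merely moved into preprocessing. Counting sort on precomputed global ranks likewise costs $O(k+\lvert\mathcal{P}\rvert)$. State the passive bound under the ranks-as-input convention and drop the pre-sorting claim.

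In active mode the pre-sorting device fails outright. {\tt SensitivityBasedHeuristic} sorts $\mathcal{P'}$ by $g_{w^{t-1}}$, and $w^{t-1}$ changes at every iteration, so there is no fixed order to precompute. The only sound route is the one you give in parentheses: treat $\theta$ as a fixed constant (the experiments use $\theta=10^3$), which gives $O(T\theta\log\theta)=O(T)$.

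In the paper, $n$ in active mode is the number of queries, i.e.\ the $T$ rankings $S^*_t$ of size $2$, which matches the meaning of $n$ in passive mode. Your redefinition $n=T\theta$ is unnecessary. Without bounded $\theta$, the total you derive is $O(n\log\theta)$, not $O(n)$, so the redefinition does not rescue anything.

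The paper's proof does not charge the sampling or the heuristic at all. Your explicit accounting of them, and the resulting observation that the $O(n)$ claim needs $\theta$ bounded, is a real gain in rigor over the published argument.
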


\begin{proof}
Let $\mathcal{S}=\{S_1,\ldots,S_n\}$ be a set of $n$ user rankings, where $k$ is the size of the largest ranking $S_t$ and $m$ is a bounded number of measures in $\mathcal{M}$. Once the pairwise comparison matrix ${\bf A}$ is built, the preference vector of weights $w$ can be computed using various mathematical techniques, including the eigenvector-based method (EVM) \cite{brunelli2014introduction}. The worst-case time complexity of the EVM approach is $O(m^3)$ \cite{Eigenproblem99}, which is of constant time complexity $O(1)$ since $m$ is constant.
The time complexity of computing the matrix at line~\ref{learn:kendall} of the $\learnWeights$ function is $O(m\ n\ k)$ since Kendall's W is in $O(k)$ (see Definition (\ref{equ-kendall-w-formula})). The complexity of lines $\ref{f:learn}$, $\ref{f:build}$, and $\ref{f:weight}$ in $\learnWeights$ are respectively $O(n\ m^2 + m\ n\ k)$, $O(m^2)$, and $O(m^3)$.
Notice that in AHP, it is demonstrated in \cite{saaty2003magic} that the number of criteria is recommended to be no more than seven $\pm 2$. Thus, assuming $m$ is a small and fixed number of measures, we can set $m\leq 9$ and obtain an asymptotic quadratic complexity of $O(n\ k)$ when AHP is in passive mode. In other words, the approach takes $O(k)$ for each ranking in the given $\mathcal{S}$.
When AHP is in active mode, where queries are pairs of patterns ($k=2$), the time complexity is linear on the number of queries submitted to the user, i.e., $O(n)$.
\end{proof}

It is important to emphasize that in practice, the set $\mathcal{S}$ is typically reduced to a single large ranking in passive learning mode ($n=1$ and a large $k$), and to a set of $n$ queries consisting of pairs of patterns in active learning mode (large $n$ and $k=2$). As a result, \ahp{} can run in linear time, with a complexity that is linear in $k$ for passive mode and linear in $n$ for active mode. This low complexity makes \ahp{} a fast approach for learning a user ranking function, as supported by the experimental evaluation in Section~\ref{sec:xp}.

\section{Running Example}
\newcolumntype{Y}{>{\centering\arraybackslash}X}

To demonstrate our approach, we utilize an initial set of patterns $\mathcal{P}=\{P_1,\ldots,P_{10}\}$, along with a user ranking $r_{\triangleright_u}$, and five interestingness measures $\mathcal{M}=\{M_1,\ldots, M_5\}$. The user ranking for $\mathcal{P}$, rankings given by $M_i$ on $\mathcal{P}$, and the \ahp{} results are presented in the columns of Table \ref{tab:patterns}. It is important to note that none of the measures perfectly match the user ranking (column $user$).

\begin{table}[h]
	\centering
	\begin{tabular} {|c|c||cc|cc|cc|cc|cc||cc|} 
		\hline
		$user$ & $S$ &    \multicolumn{2}{c|}{$M_1$ -- $\mathit{rank}$} &    \multicolumn{2}{c|}{$M_2$ -- $\mathit{rank}$} &    \multicolumn{2}{c|}{$M_3$ -- $\mathit{rank}$} &    \multicolumn{2}{c|}{$M_4$ -- $\mathit{rank}$}&   \multicolumn{2}{c||}{$M_5$ -- $\mathit{rank}$} &  \multicolumn{2}{c|}{$g_w$ -- $\mathit{rank}$} \\ 
		\hline
		 \textbf{1}    & $P_7$        &0.95 & \textbf{1}  & 0.48 & 8  & 0.79 & 2  & 0.30 & 9  & 0.80 & \textbf{1} &0.72  &\textbf{1} \\ 

		 \textbf{2}   &$P_3$       & 0.75 & 3  & 0.72 & 1  & 0.78 & 3  & 0.70 & \textbf{2}  & 0.61 & \textbf{2} & 0.68 &\textbf{2}\\ 

		 \textbf{3}   &$P_6$        & 0.80 & 2  & 0.49 & 7  & 0.50 & 9  & 0.65 & 4  & 0.60 & \textbf{3}  & 0.61 &\textbf{3}\\ 
		
		 \textbf{4}    & $P_1$       & 0.47 & 9  & 0.47 & 9  & 0.76 & 5  & 0.56 & 6  & 0.59 & \textbf{4}  & 0.54 & \textbf{4}\\ 
		
		 \textbf{5}    & $P_8$       & 0.56 & 6  & 0.65 & 4  & 0.63 & 8  & 0.69 & 3  & 0.40 & \textbf{5}  & 0.53 &\textbf{5} \\ 

		 \textbf{6}    & $P_{10}$       & 0.57 & 5  & 0.50 & \textbf{6}  & 0.80 & 1  & 0.4  & 8  & 0.02 & 10  & 0.34&8 \\

		 \textbf{7}    & $P_5$        & 0.62 & 4  & 0.62 & 5  & 0.66 & 6  & 0.57 & 5  & 0.27 & \textbf{7}  &0.48 &\textbf{7} \\ 
		
		 \textbf{8}    & $P_2$        & 0.48 & \textbf{8}  & 0.66 & 3  & 0.65 & 7  & 0.1  & 10 & 0.05 & 9  &0.33 &9 \\ 		
		
		 \textbf{9}    & $P_4$       & 0.50 & 7  & 0.68 & 2  & 0.77 & 4  & 0.50 & 7  & 0.35 & 6  & 0.50&6 \\ 
				
		 \textbf{10}    & $P_9$       & 0.02 & \textbf{10} & 0.1  & \textbf{10} & 0.05 & \textbf{10} & 0.8  & 1  & 0.25 & 8  &0.18 &\textbf{10} \\ 
		
		\hline
	\end{tabular}
	\caption{Running example with $10$ patterns and $5$ measures.\label{tab:patterns}} 
\end{table}

The data presented in Table \ref{tab:patterns} is utilized by our algorithm to construct the AHP Matrix and learn a weight vector $w$ over the set of interestingness measures $\mathcal{M}=\{M_1,\ldots, M_5\}$. For simplicity, we refer to the AHP algorithm in the passive mode with $\mathcal{S}=\{S_a=\langle P_3,P_1,P_5,P_2,P_4\rangle \}$. In this case, the user's ranking order is $P_3 \triangleright_u P_1 \triangleright_u P_5 \triangleright_u P_2 \triangleright_u P_4$. AHP learns the weights by invoking $\learnWeights$ on $S_a$ and computing the correlation between the user rankings of $S_a$ and the rankings of the measures $M_i$. The matrix in (\ref{eqn:delta-mat}) shows the $\Delta$ values for each measure pair. It is worth noting that no two measures perfectly match the user ranking (column $user$) for the pattern set $\mathcal{P}$.

To compute $\Delta_{1,2}(S_a)$, we need to calculate Kendall's W $K_{1}(S_a)$ (resp., $K_{2}(S_a)$) between the ranking of $M_1$ (resp., $M_2$) and the user ranking $S_a$. The value of $\Delta_{1,2}(S_a)$ is given by $(K_{1}(S_a)-K_{2}(S_a))$, which in this case is $(0.25 - 0.25) = 0$. After scaling the values of $\Delta$ to $(-9..-1)$ for negatives and $(1..9)$ for positives, we can observe from Equation~(\ref{eqn:delta-mat}) that measure $M_5$ is better than $M_4$ with a degree of $6$. The same degree of preference is observed between $M_5$ and $M_3$, while $M_3$ and $M_4$ are indifferent.

\renewcommand{\kbldelim}{(}
\renewcommand{\kbrdelim}{)}
{\footnotesize
	\begin{equation}
		\label{eqn:delta-mat}
		\Delta = \kbordermatrix{
			& (M1) & (M2) & (M3) & (M4) & (M5) \\
			(M1) &  & 0 &	0.42 &	0.42 &	-0.43 \\
			(M2)&& &	0.20 &	0.28&-0.25\\
			(M3)&&   &	&	-0.08&-0.55\\
			(M4)& &	&	&	&-0.60\\
			(M5)&&	 &	 &	 &
		} \overset{Scaling}{\longrightarrow} 
		\kbordermatrix{
			& (M1) & (M2) & (M3) & (M4) & (M5) \\
			(M1) &  &1 &	4 &	4 &	-4 \\
			(M2)&& &	2 &	3&-3\\
			(M3)&&   &	&	1&-6\\
			(M4)& &	&	&	&-6\\
			(M5)&&	 &	 &	 &
		} 
	\end{equation}
}

Afterward, the $\learnWeights$ function in \ahp{} computes the AHP matrix $A$ using the scaled $\Delta$ matrix (i.e., the average correlation gap) as input:

\renewcommand{\kbldelim}{(}
\renewcommand{\kbrdelim}{)}
{\small
	\begin{equation}
		\label{eqn:AHP-mat}
		{\bf A}
		= \kbordermatrix{
			& (M1) & (M2) & (M3) & (M4) & (M5) \\
			(M1)&1&	1 &	4&	4&1/4\\
			(M2)&1&	1&	2&	3&1/3\\
			(M3)&1/4&  1/2&	1&	1&1/6\\
			(M4)& 1/4&	1/3&	1&	1&1/6\\
			(M5)&4&	 3&	 6&	 6&1
		}
	\end{equation}
}

At the end, \ahp{} computes the weighting vector $w$ by solving a minimization problem.
The learned weight vector $w = (w_{M_1}, w_{M_2}, w_{M_3}, w_{M_4}, w_{M_5})$ is $(0.24, 0.24, 0.065, 0.065, 0.39)$, reflecting the importance of each measure for achieving the goal of the user ranking function $g_w$.

Using the AHP interestingness measure $g_w$ from equation \ref{eq:score}, we can rank all patterns in $\mathcal{P}$ provided in Table \ref{tab:patterns}. \ahp{} achieves an overall ranking accuracy of 91\% in this example, correctly ranking the first 5 patterns, the seventh, and the tenth ones.

%
%
%
%
%
%
%
%
%
%
%

\newcommand{\ones}{x^{-1}& (1)\xspace}
\newcommand{\zeros}{x^{-1}& (0)\xspace}
\newcommand{\unbound}{x^{-1}& (*)\xspace}

\newtheorem{defi}{Definition}
\newtheorem{prop}{Proposition}
\newtheorem{corolarry}{Corollary}

\let\comment=\relax
\def\trackingLevel{2}
\newcommand{\comment}[3]{\ifnumcomp{\trackingLevel}{=}{2}{{\color{#1}[\bf\footnotesize{{#2: \textit{#3}}}]}}{}}


\newcommand{\zoo}{{$\tt Zoo$}\xspace}
\newcommand{\Hepatitis}{{$\tt Hepatitis$}\xspace}

\newcommand{\vote}{{$\tt Vote$}\xspace}
\newcommand{\anneal}{{$\tt Anneal$}\xspace}
\newcommand{\chess}{{$\tt Chess$}\xspace}
\newcommand{\mushroom}{{$\tt Mushroom$}\xspace}
\newcommand{\connect}{{$\tt Connect$}\xspace}
\newcommand{\tone}{{$\tt T10$}\xspace}

\newcommand{\tfour}{{$\tt T40$}\xspace}

\newcommand{\pumsb}{{$\tt Pumsb$}\xspace}
\newcommand{\retail}{{$\tt Retail$}\xspace}

\newcommand{\zooI}[2]{{$\tt Zoo\_#1\_#2$}}
\newcommand{\voteI}[2]{{$\tt Vote\_#1\_#2$}}
\newcommand{\annealI}[2]{{$\tt Anneal\_#1\_#2$}}
\newcommand{\chessI}[2]{{$\tt Chess\_#1\_#2$}}
\newcommand{\mushroomI}[2]{{$\tt Mushroom\_#1\_#2$}}
\newcommand{\connectI}[2]{{$\tt Connect\_#1\_#2$}}
\newcommand{\toneI}[2]{{$\tt T10\_#1\_#2$}}
\newcommand{\tfourI}[2]{{$\tt T40\_#1\_#2$}}
\newcommand{\pumsbI}[2]{{$\tt Pumsb\_#1\_#2$}}
\newcommand{\retailI}[2]{{$\tt Retail\_#1\_#2$}}

\newcommand{\rand}{{$\tt rand$}\xspace}
\newcommand{\lex}{{$\tt lex$}\xspace}

\newcommand{\cputimee}{{\tt t}}
\newcommand{\recc}[1]{${\tt R@{#1}}$}
\newcommand{\rec}[1]{${\tt R@{#1\%}}$}

\section{Experiments}
\label{sec:xp}
In this section, we empirically evaluate the effectiveness of our proposed pattern ranking framework \ahp{}. We first introduce our case study on association rules mining and the different oracles we used to simulate user-specific rankings. Next, we present the research questions we aimed to answer, the experimental protocol we followed, and the results we obtained.

\newcommand{\Rule}[2]{{#1}\rightarrow {#2}}

\subsection{Mining associations rules (ARs)} 
\label{sec:case_study}
Our approach, \ahp{}, is experimentally evaluated on association rule mining, which is one of the most important and well-studied tasks in data mining. Association rules are implications of the form $\Rule{X}{Y}$, where $X$ and $Y$ are itemsets such that $X\cap Y=\emptyset$ and $Y \neq \emptyset$. Here, $X$ represents the body of the rule and $Y$ represents its head. The \emph{frequency} of an itemset $X$ in a dataset, denoted by $freq(X)$, is the number of transactions of the dataset containing $X$. The frequency of a rule $\Rule{X}{Y}$ is the frequency of the itemset $X\cup Y$, that is, $freq(\Rule{X}{Y})=freq(X\cup Y)$.

Various interestingness measures for association rules have been proposed, including support, confidence, interest factor, correlation, and entropy. Tan et al. \cite{tan2004selecting} conducted a study on the usefulness of existing measures in different application types and identified seven independent groups of consistent measures having similar properties, as shown in Table \ref{tab:plain}.

\definecolor{Gray}{gray}{0.85}
\newcolumntype{Y}{>{\centering\arraybackslash}X}
\newcolumntype{Z}{>{\raggedleft\arraybackslash}X}
\newcolumntype{a}{>{\columncolor{Gray}}c}

\begin{table}[h]

	\begin{tabularx}{\textwidth}{ccX}
		\hline
		 Groups &$\qquad$ & Measures                                                                  \\ \hline
		1  & &\textbf{Yules Q}, Yules Y, Odds Ratio                                               \\ \hline
		2    &   & \textbf{Cosine}, Jaccard                                                           \\ \hline
		3    &   & \textbf{Laplace}, Support                                                          \\ \hline
		4    &   & \textbf{$\phi$coefficient}, CollectiveStrength, piatetskyShapiro's \\ \hline
		5    &   & \textbf{Goodman Kruskal's}, Gini Index                                             \\ \hline
		6    &   & \textbf{Interest factor},added value, Klosgen K                                    \\ \hline
		7    &   & \textbf{certainty factor}, Mutual Information, Cohen's $\kappa$                           \\ \hline
	\end{tabularx}

	\caption{Independent Subjective Measure Groups}
	\label{tab:plain}
\end{table}

To evaluate our approach, we selected one measure from each of the seven groups of consistent measures identified by Tan et al. \cite{tan2004selecting}. This gave us a set of seven measures that are independent and have similar properties. These measures are highlighted in bold in Table \ref{tab:plain}.

\newcommand{\randemu}{{\sc Rand-Emu}}
\newcommand{\lexemu}{{\sc Lex-Emu}}
\newcommand{\chiemu}{{\sc Chi-Emu}}

\subsection{User feedback emulators}
Since it can be challenging to evaluate an interactive approach with limited user feedback, we simulate user feedback by using three different objective target ranking functions:

\begin{itemize}
\item \randemu: The user-specific ranking is equivalent to a random weighted aggregation function. For each measure $M_i$, we generate a random weight $w_i\in [0,1]$ such that $\sum_{i=1}^{m} w_i=1$.
\item \lexemu: The user-specific ranking follows a lexicographic order on the measures. We define a lexicographic order $lex$ on $\mathcal{M}$ as $lex(\mathcal{M})=\langle l_1, \ldots, l_m \rangle$ such that $\bigcup_{i=1}^{m} l_i = \mathcal{M}$. Given two patterns $(P_1, P_2)$, $P_1$ is preferred to $P_2$ if $(l_i(P_1)>l_i(P_2))$ or $(l_i(P_1)=l_i(P_2) \wedge l_{i+1}(P_1)>l_{i+1}(P_2))$ for $i=1$ to $m-1$.
\item \chiemu: $\chi^2$ is a statistical measure that is a good candidate to emulate user feedback \cite{RINGUEST1986379}. While $\chi^2$ can be a complex function to approximate with non-trivial correlations, it is also a quality measure suggested in \cite{dzyuba2017learning}. We use $\chi^2$ as the target user-specific ranking function over ARs. For an association rule $\Rule{X}{Y}$, the $\chi^2$ value is defined as:
\begin{equation}
\chi^2 (\Rule{X}{Y}) = \frac{ \left(freq(\Rule{X}{Y})- \frac{freq(X) freq (Y)}{N} \right) ^2}{\frac{freq (X) freq (Y)}{N}},
\end{equation}
where $N$ is the number of transactions in the dataset.
\end{itemize}

\newcommand{\RQ}[1]{\textbf{RQ#1}}

\subsection{Research questions}

Our evaluation seeks to address the following research questions:

\begin{itemize}
\item \RQ{1}: \emph{Can we determine a user's preferences for a set of patterns based on a sample of ranked patterns? If so, how much data is needed and how long does the learning process take?}
\item \RQ{2}: \emph{How does our proposed AHP-based learning method compare to the SVM-based baseline for automatically learning user-specific ranking functions?}
\item \RQ{3}: \emph{How effective is \ahp{} in an active learning context? Is the sensitivity heuristic a good choice for query selection?}
\item \RQ{4}: \emph{How effective is \ahp{} in an interactive data mining context?}
\end{itemize}

\subsection{Experimental protocol}

\subsubsection{Implementation settings}
We implemented our \ahp{} approach in Java with two modes: the passive mode, denoted as \ahpp{}, and the active mode, denoted as \ahpa{}. The code is publicly available on GitHub at ${\tt github.com/lirmm/AHPRank}$. We compared our approach to the state-of-the-art SVM-based approach \svm{} \cite{dzyuba2017learning}. We use \svmp{} to denote the passive version and \svma{} to denote the active version, following \cite{dzyuba2013interactive}. All experiments were conducted on an Intel Core $i7$ 2.4 GHz with 16 GB of RAM, with a maximum duration of one hour.

\subsubsection*{Metrics}
We use three metrics to evaluate the performance of our approach:

\begin{enumerate}
\item The Spearman's rank correlation coefficient $\rho$ measures the accuracy of the learned ranking compared to the target ranking over $n$ patterns. It is computed as follows:

\begin{equation}
    \rho= 1-{\frac {6\sum_{i=1}^n (\mathit{rank}_{L}(P_i)-\mathit{rank}_{T}(P_i))^2}{n(n^{2}-1)}}
\end{equation}

\item The recall metric $\text{R}@k$ evaluates the ability of our approach to identify the top $k$ most interesting patterns. It is computed as follows:

\begin{equation}
    \text{R}@k = \frac{\abs{\{\mathit{rank}_{L}(P_i) \leq k: i \in 1..n \wedge \mathit{rank}_{T}(P_i) \leq k\}}}{k}
\end{equation}

\item CPU time in seconds is measured for both the passive and active modes. The waiting time between two queries is also recorded for the active mode.

\end{enumerate}

\subsection{Benchmark Datasets}

We selected several datasets of realistic sizes from the FIMI repository.\footnote{fimi.uantwerpen.be/data/} These datasets have different characteristics that represent various application domains. Table~\ref{tab:datasets} reports the number of transactions (\nbTrans), the number of items (\nbItems), the application domain, and the number of valid rules (\nbRules) corresponding to the initial rules mined using a standard association rules algorithm without any knowledge about the user, for each dataset. The datasets are presented in ascending order of \nbRules.

\begin{table}[tbp]
	
	\centering
	
	\begin{tabularx}{\textwidth}{|Y|Y|Y|Y|c||Y|} \hline
		
		{Dataset} & \multicolumn{1}{c|}{\nbTrans} & \multicolumn{1}{c|}{\nbItems}   & {Density(\%)}& {Type of Data} & \multicolumn{1}{c|}{\nbRules}   \\
		\hline
		\hline
		{\Hepatitis}  & 137 & 68 &50.00&{Disease} & 0.5 M\\
		{\connect}  & 67,557  & 129 &33.33& {Game steps} &1 M\\
		{\mushroom}  & 8,124  & 119 &18.75& {Species of mushrooms} & 1.5 M\\
		{\tfour}  & 100,000  & 1,000 &4.20&{Synthetic dataset} &2 M\\
		{\retail}  & 88,162 & 16,470 &0.06& {Retail market basket data} & 2,5 M\\ \hline
		\multicolumn{6}{r}{ \tfour:{$\tt T40I10D100K$}}
	\end{tabularx}

	\caption{Dataset Characteristics. \label{tab:datasets}} 

\end{table}

\newcommand{\vbm}{${\tt VBM}$}

\newcommand{\timeout}[1]{${\tt TO} $}

\newcommand{\cputime}{${\tt Time}\,$}
\newcommand{\latencytime}{${\tt TQ}\,$}

\subsection{Passive learning results}

\begin{table}[h]
	\centering
	\begin{tabularx}{\textwidth}{|l||Z||Z|Z|Z|Z|Z|Z|Z|} 
		\hline 
		\multirow{2}{*}{\textbf{Datasets}}	&	\multicolumn{8}{|c|}{\textbf{measures}} \\ \cline{2-9}
		&\multicolumn{1}{c|}{(1)}  & \multicolumn{1}{c|}{(2)} 
		& \multicolumn{1}{c|}{(3)} & \multicolumn{1}{c|}{(4)}  
		& \multicolumn{1}{c|}{(5)} &  \multicolumn{1}{c|}{(6)} 
		& \multicolumn{1}{c|}{(7)}& \vbm \\  \hline \hline
		\multicolumn{9}{|c|}{\randemu}\\ \hline	
		{\Hepatitis} &0.78
		&0.92
		&0.29
		&\textbf{0.97}
		&0.79
		&0.85
		&0.45
		&\textbf{0.97}   
		\\ 
		
		{\connect} 
		& 0.68
		& 0.62
		& \textbf{0.95}
		& 0.70
		& 0.60
		& 0.45
		& 0.57   
		&\textbf{0.95}		
		\\ 
		
		{\mushroom} 
		&{0.84}
		&\textbf{0.97}
		&0.28	
		&0.83
		&0.55 
		&0.73	
		&0.36
		&\textbf{0.97}
		\\

		{\tfour}&0.71
		&\textbf{0.99}
		&0
		&0.76
		&\textbf{0.99}
		&\textbf{0.99}
		&0  
		&\textbf{0.99}	
		\\ 
		
		{\retail} &0
		&\textbf{0.98}
		&0.43
		&0.71
		&0.84
		&\textbf{0.98}
		&0.51 
		&\textbf{0.98}		
		\\ \hline

		\multicolumn{9}{|c|}{\lexemu}\\ \hline
		{\Hepatitis} &0.79
		&0.64
		& 0
		&0.68
		&\textbf{0.92}
		&0.63
		&0   
		&\textbf{0.92 }
		\\ 
		
		{\connect} 
		
		&0.21
		&0.47 
		&\textbf{0.76}
		&0.26 
		& 0 
		&0.23
		&0.50 
		&\textbf{0.76}		
		\\ 
		
		{\mushroom} &0.21 
		&\textbf{0.82}
		&0.38	
		&0.77
		&0.78
		&0.37
		&0
		&\textbf{0.82}
		\\

		{\tfour}&0.76
		&0.98
		&0
		&0.77
		&\textbf{0.99}
		&0.97
		&0  
		&\textbf{0.99}	
		\\ 
		
		{\retail} &0.10
		&0.78
		&0.54
		&0.49
		&\textbf{0.84}
		&0.80
		&0.58 
		&\textbf{0.84}		
		\\ \hline 

		\multicolumn{9}{|c|}{\chiemu}\\ \hline
		{\Hepatitis} &0.20
		&	0.92
		&	0.36
		&	\textbf{0.96}
		&0.73
		&0.85
		&0.43  
		&\textbf{0.96}  
		\\ 

		{\connect} 
		&0.09
		&0.15
		& 0 
		&0.02
		& \textbf{0.29}
		& 0.01
		& 0
		&\textbf{0.29}		
		\\ 
		
		{\mushroom} &0.51 
		&0.64
		& 0	
		&0.46	
		&0.28 
		&\textbf{0.88}	
		& 0 
		&\textbf{0.88}
		\\

		{\tfour}&0.71
		&0.98
		&0.17
		&0.64
		&0.98
		&\textbf{0.99}
		&0.29	
		&\textbf{0.99}		
		\\ 
		
		{\retail} &0
		&\textbf{0.94}
		&0.53
		&0.60
		&0.83
		&0.57
		&0.61 
		&\textbf{0.94} \\  \hline

		\multicolumn{9}{|r|}{(1): \yules $\qquad$(2): \cosine $\qquad$ (3): \lap$\qquad$ (4): \lev }\\
		\multicolumn{9}{|r|}{  (5): \lamb $\qquad$ (6): \IFa$\qquad$ (7): \cert  }\\ 
		\hline
	\end{tabularx}
	\caption{Correlation results with user ranking for using the emulators. }
	\label{tab:xp1}
\end{table}
In this section, we address the first two research questions (\RQ{1} and \RQ{2}). To that end, we perform a 5-fold cross-validation on each dataset, where we randomly select $20\%$ of the rules as training data and use the remaining $80\%$ for testing. This type of cross-validation allows us to evaluate the effectiveness of the approaches in learning from relatively small training sets. We report the results averaged over the 5 folds.

\medskip
\noindent
\textbf{A) Analyzing the different user feedback emulators.} 
Table \ref{tab:xp1} presents the results of the correlation analysis between the user-specific ranking functions (\randemu, \lexemu, and \chiemu) and the seven interestingness measures, as well as the \emph{Virtual Best Measure} (\vbm{}), which returns the best rank correlation $\rho$ provided by one of the seven measures. The analysis was performed on all datasets, and the results are reported in terms of Spearman's rank correlation coefficient $\rho$.

The results show that the correlation between a given measure and a user-specific ranking function can vary significantly depending on the dataset. For instance, the \lamb{} measure is highly correlated with \chiemu{} on \tfour{} dataset ($\rho=98\%$), but it is weakly correlated on \connect{} dataset ($\rho=29\%$). Similarly, the correlation between \randemu{} or \lexemu{} and the measures also varies significantly across datasets. However, the \vbm{} approach, which selects the best measure for each dataset, achieves a high level of accuracy (with a mean of $91\%$). These results suggest that a weighted aggregation of the selected measures could lead to a good trade-off between accuracy and robustness.

\begin{table}[tbp]
	\centering	
	
	\begin{tabularx}{\textwidth}{|l||Z|Z|Z|Z||Z|Z|Z|Z|} 
		\hline
		\multicolumn{9}{|c|}{\textbf{\randemu}} \\ \hline
		
		&  \multicolumn{4}{c||}{\svmp} & \multicolumn{4}{c|}{ \ahpp}\\ \hline
		
		&\multicolumn{1}{c|}{$\rho$}&\multicolumn{1}{c|}{\rec{10}}& \multicolumn{1}{c|}{\rec{1}} & \multicolumn{1}{c||}{t(s)}
		&\multicolumn{1}{c|}{$\rho$}&\multicolumn{1}{c|}{\rec{10}}& \multicolumn{1}{c|}{\rec{1}} & \multicolumn{1}{c|}{t(s)}		
		\\ \hline
		{\Hepatitis } 
		&0.94&0.74&0.77&851&0.93&0.79&0.71&9 
		\\ 
		{\connect } 
		&  {-}&-&-& \tt TO &0.95 &0.72&0.63&26 
		\\ 
		{\mushroom }  
		& {-}&-&-& \tt TO&0.89&0.69&0.66&32
		\\ 
		{\tfour  }
		&{-}&-&-& \tt TO&0.99&0.93&0.58 & 63
		\\ 
		{\retail }
		&{-}&-&-& \tt TO&0.93&0.95&0.96&66
		\\ \hline \hline
		{{$mean$}} & - & - & -& -& 0.94&0.82&0.71& 39\\ \hline 		\hline
		
		\multicolumn{9}{|c|}{\textbf{\lexemu}} \\ \hline

		&  \multicolumn{4}{c||}{\svmp} & \multicolumn{4}{c||}{ \ahpp}\\ \hline
		
		&\multicolumn{1}{c|}{$\rho$}&\multicolumn{1}{c|}{\rec{10}}& \multicolumn{1}{c|}{\rec{1}} & \multicolumn{1}{c||}{t(s)}
		&\multicolumn{1}{c|}{$\rho$}&\multicolumn{1}{c|}{\rec{10}}& \multicolumn{1}{c|}{\rec{1}} & \multicolumn{1}{c|}{t(s)}		
		\\  \hline
		{\Hepatitis  } 
		&0.99&0.99&0.99&902&0.92&0.86&0.81& 13\\ 
		{\connect  } 
		&  {-}&-&-& \tt TO &0.60 &0.46&0.50&21\\ 
		{\mushroom  }  
		& {-}&-&-& \tt TO&0.86&0.56&0.65&32\\ 
		{\tfour  }
		&{-}&-&-& \tt TO&0.99&0.93&0.46 & 71\\ 
		{\retail  }
		&{-}&-&-& \tt TO&0.78&0.68&0.65&68\\ \hline \hline
		{{$mean$}} & -&-&-& - & 0.83 &0.70&0.61 & 41\\ \hline 		\hline
		
		\multicolumn{9}{|c|}{\textbf{\chiemu}} \\ \hline
		
		&  \multicolumn{4}{c||}{\svmp} & \multicolumn{4}{c|}{ \ahpp}\\ \hline
		
		&\multicolumn{1}{c|}{$\rho$}&\multicolumn{1}{c|}{\rec{10}}& \multicolumn{1}{c|}{\rec{1}} & \multicolumn{1}{c||}{t(s)}
		&\multicolumn{1}{c|}{$\rho$}&\multicolumn{1}{c|}{\rec{10}}& \multicolumn{1}{c|}{\rec{1}} & \multicolumn{1}{c|}{t(s)}		
		\\ \hline
		{\Hepatitis  } 
		&0.99& 0.97& 0.92 & 979 &{0.94}&{0.90}& 0.77 & 10\\ 
		{\connect  } 
		& {-}&-&-& \tt TO & 0.15 &{0.28}&{0.67}& 17\\ 
		{\mushroom  }  
		& {-}&-&-& \tt TO & 0.98&0.91&0.93 & 28\\ 
		{\tfour  }
		& {-}&-&-& \tt TO&0.99&0.99&0.96& 84\\ 
		{\retail  }
		& {-}&-&-& \tt TO& 0.91&0.96&0.95 & 50\\ \hline \hline
		{{$mean$}} &-&-&-& - & 0.79 &0.81&0.86 & 38\\ \hline 		\hline
		
	\end{tabularx}
	\caption{5-fold cross-validation results (passive learning).}
	\label{tab:xp2}
\end{table}

\medskip
\noindent
\textbf{B) Comparing \ahpp{} with \svmp{}.} 
Table \ref{tab:xp2} presents the results of the k-folds cross-validation for \svmp{} and \ahpp{}, where we report the averaged rank correlation $\rho$, recall values (\rec{10} and \rec{1}), and CPU time in seconds averaged over the folds.
Since \svmp{} can handle a training set of up to $100K$ rules within an hour, we compared it with our approach only on the \Hepatitis{} dataset.
We observed that \svmp{} outperformed \ahpp{} in terms of ranking accuracy for all user feedback emulators, although \ahpp{} remained competitive with acceptable accuracy.
The same trend was observed in terms of recall at the $10\%$ and $1\%$ top of the ranking (\rec{10} and \rec{1}).
We noted that \ahpp{} achieved a high correlation with the user ranking functions on most datasets.
However, on \connect{}, we observed a weak correlation with \chiemu{}, while a high correlation was observed with \randemu{} and \lexemu{}.
This can be attributed to the fact that \randemu{} and \lexemu{} are linear functions expressed with the given $7$ interestingness measures, which explains their high accuracy, while \chiemu{} is a complex function that requires more extensive statistics to learn.
Notably, the high accuracy of \svmp{} came at the expense of longer running times. For instance, it took more than $15$ minutes for \svmp{} to learn from a training set of $100K$ rules, and exceeding $100K$ rules, \svmp{} required more than one hour, while \ahpp{} was able to handle $7.5M$ rules in less than $4$ minutes.\\
In what follows, the observations and the conclusions drawn from \chiemu{} remain true for \randemu{} and \lexemu.
For the sake of simplicity, we only report the results on the complex function \chiemu{}.

\begin{figure}[H]
	\centering
	\includegraphics[width=\textwidth]{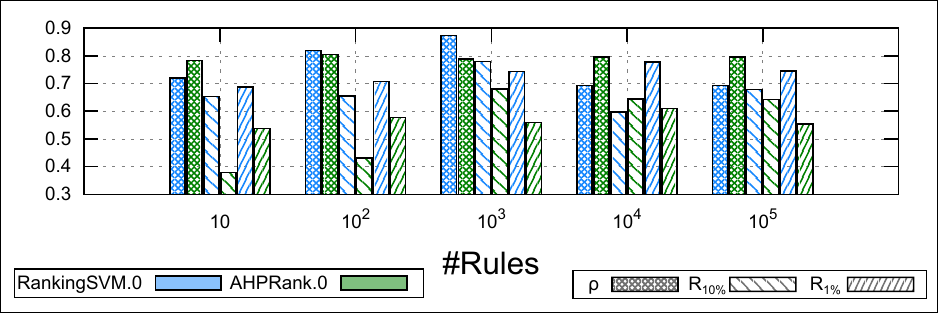} %
	
	\caption{ Learning accuracy comparison between \svmp{} and \ahpp{}  learning \chiemu{} on different training data size.}%
	\label{fig:passive1}%
\end{figure}

\medskip
\noindent
\textbf{C) Impact of varying the size of the training data on the learning.} 
 It is crucial to emphasize that our proposed \ahp{} approach aims to achieve a fast learning process while providing high accuracy in ranking patterns.
To further support our observation on the scalability of \ahpp{} compared to \svmp{}, we present in Figures \ref{fig:passive1} and \ref{fig:passive2} a performance comparison by varying the size of the training data when learning the \chiemu{} function, as similar results were obtained for \randemu{} and \lexemu{} functions.
For each dataset, we randomly select $nb$ rules, and we compare the two approaches for $nb\in{10, 100, 1K, 10K, 100K}$. The results are averaged over ten runs.

Regarding the Spearman correlation $\rho$, Figure \ref{fig:passive1} shows that the two approaches have a discrepancy of only $5\%$ when the training data does not exceed $1K$. However, the gap becomes significantly more substantial (exceeding $10\%$) and in favor of \ahpp{} when the training data contains $10K$ and $100K$ rules.

In terms of recall at $10\%$ and $1\%$, \svmp{} outperforms \ahpp{} with a gap of $23\%$ and $15\%$ at \rec{10} and \rec{1}, respectively, when the training data contains only $10$ rules. However, as the size of the training data increases, the gap between the two approaches becomes narrower, with a difference of less than $5\%$ for \rec{10}, and remaining relatively constant for \rec{1}.

\begin{figure}[H]
	\centering
	
	\includegraphics[width=\textwidth]{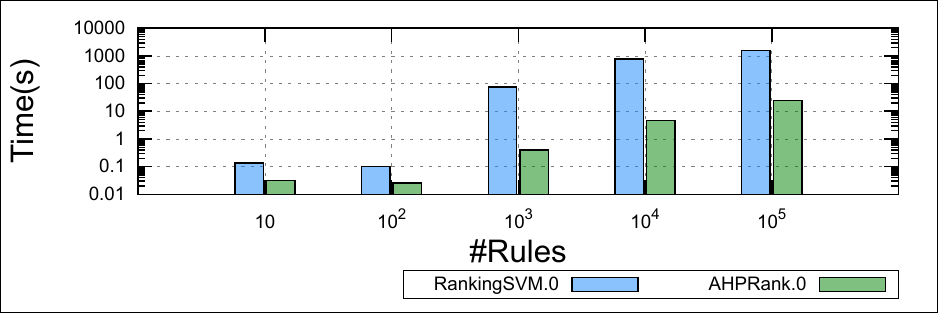} %
	\caption{ CPU time comparison between \svmp{} and  \ahpp{}  learning using \chiemu{} emulator on different training data sizes averaged overall datasets.}%
	\label{fig:passive2}%
\end{figure}

In terms of CPU time, Figure \ref{fig:passive2} demonstrates that \svmp{} can process training data of up to $100$ rules in less than a second, whereas it takes more than one minute for $1K$ rules, $13 minutes$ for $10K$ rules, and over $26 minutes$ for $100K$ rules.
In contrast, \ahpp{} is capable of handling training data ranging from $100$ to $100K$ rules in a time span ranging from $0.03$ to $24.86$ seconds, while still maintaining a ranking accuracy comparable to that of \svm.

\subsection{Active learning results}

In this section, we aim to address research questions \RQ{2} and \RQ{3}, which involve comparing the active learning versions of \svm{} and \ahp{}. To achieve this, we employ a straightforward process that involves asking the user ranking queries on pairs of patterns (e.g., \emph{do you prefer $P_i$ to $P_j$?}).

\medskip
\noindent
\textbf{A) Evaluating the effectiveness of our sensitivity-based heuristic.} 
In this section, we aim to evaluate the effectiveness of our Sensitivity-Based Generator (SBG) (see section ~\ref{sec-AHP-based-learning}) in improving the active learning process of \ahpa{}. To do so, we compare SBG to a Random Generator (RG), where RG selects a pair of patterns randomly from \nbRules{} and submits them to both \svma{} and \ahpa{}. We conduct our experiments with human-in-the-loop, setting the number of queries (i.e., iterations of Algorithm \ref{alg:ActiveLearning}) to a maximum of $20$ queries. We repeat the experiment $10$ times and take the average result to account for the sampling step of our SBG and the randomness of the RG approach. After a few tests, we set the sample size $\mathcal{X}$ picked at line ~\ref{ahp:sample} of algorithm ~\ref{alg:ActiveLearning} to $\theta=10^3$, providing a good trade-off between time selection and the accuracy of the selected pair.

Figure \ref{fig:active} presents a scatter plot of $20$ iterations comparing the performance of \svma{} and \ahpa{} with both RG and SBG. We report the Spearman correlation $\rho$, recall at $10\%$, and recall at $1\%$ for each iteration. The results show that the use of SBG outperforms RG, regardless of the learning algorithm, as can be observed from the scatter plot.

\begin{figure}
	\centering
	\includegraphics[scale=0.6]{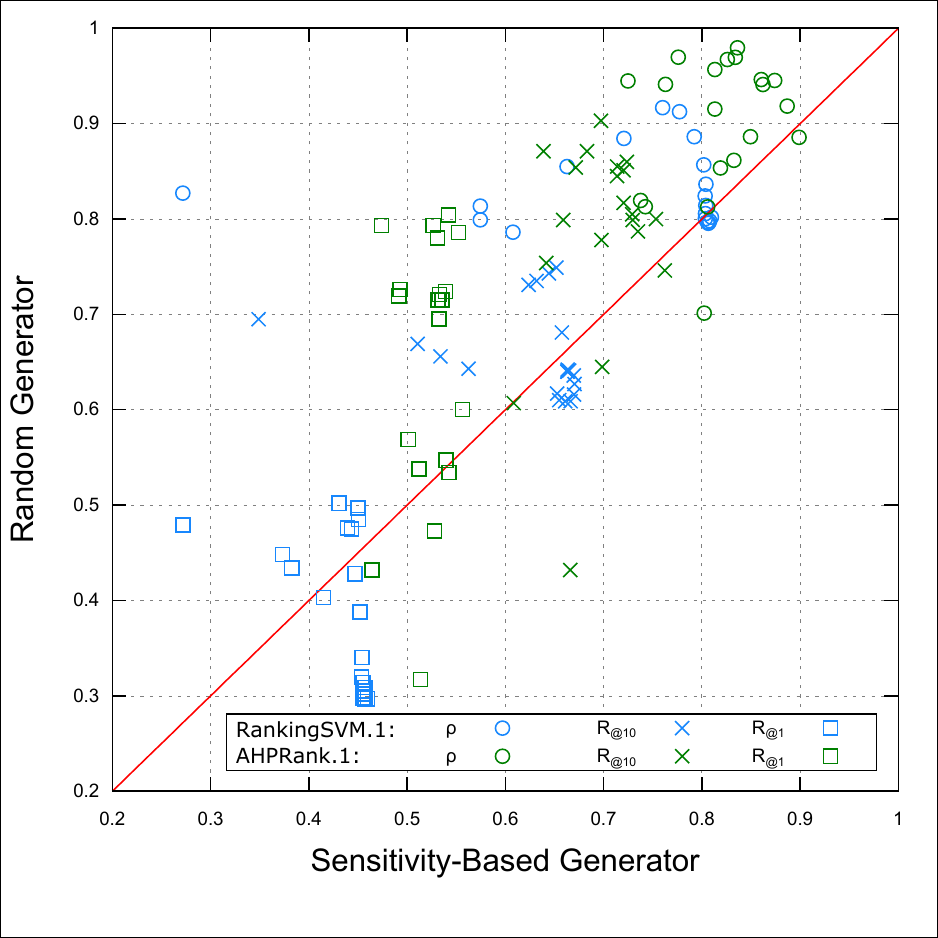} %
	
	\caption{Qualitative comparison of Random vs. Sensitivity-Based Query Generator using the Spearman correlation $\rho$  and the recall at $10\%$ and $1\%$ of each iteration.}%
	\label{fig:active}%
\end{figure}

\newcommand{\ci}{${\tt CI}$}

To support our observation on the effectiveness of SBG, we conducted a statistical test using the Wilcoxon Signed-Rank Test. We used a one-tailed alternative hypothesis with the null hypothesis that RG is more efficient than SBG: H0: The accuracy using RG $\geq$ The accuracy using SBG. The alternative hypothesis H1 states that SBG outperforms RG. With this statistical test, we concluded that the use of SBG is more efficient than RG (H1 accepted).

Table \ref{tab:wilcoxon} reports the p-value, z-score, and confidence interval (CI) of each test. Except for the case of (\svma,\rec{10}), we had strong evidence to reject the null hypothesis based on the p-value column. The CI column clearly shows that the use of SBG is better than RG. Consequently, we will use the SBG heuristic in \svma{} and \ahpa in the following.

\begin{table}[h]
	\centering
	
	\begin{tabularx}{\textwidth}{|Y|Y|YYY|} 
\hline
\textbf{Approaches} & \textbf{Metrics} & $p$-$value$ & $z$-$score$ & \ci	\\ 
\hline		
\multirow{3}{*}{\svma} & $\rho$ & $.00226$& $-2.8373$&  $99\%$\\ 
 & \rec{10} & $.12302$ & $-1.1573$ & $70\%$ \\  
  & \rec{1} & $.06301$ & $-1.5306$& $90\%$ \\ \hline
  \multirow{3}{*}{\ahpa} & $\rho$ & $.00034$& $-3.3973$& $99.9\%$ \\ 
  & \rec{10} & $.00205$ & $-2.8746$& $99\%$\\ 
  & \rec{1} &$.00139$ & $-2.9866$ & $99\%$\\ \hline
	\end{tabularx}

	\caption{ Wilcoxon Signed-Rank Test (RG vs. SBG).} 		\label{tab:wilcoxon}
\end{table}

\begin{table}[tbp]
	\centering
	
	\begin{tabularx}{\textwidth}{|r|r|XXXXX|} 
		\hline
		\multicolumn{2}{|c|}{}  & (a)& (b)& (c)& (d)& (e)      \\ 
		\hline \hline
		\multicolumn{7}{|c|}{ \textbf{10 Queries} } \\ \hline
		
		\multirow{4}{*}{\rotatebox[origin=c]{0}{(1)} }  
		
		& $\rho$  & 0.73& 0.04& 0.69& {0.99}& 0.97\\ 
		& \rec{10} & 0.61& 0.28& 0.38&{0.92}& {0.91} \\ 
		& \rec{1}  & 0.44& 0.49& {0.36}& 0.45& 0.67\\ 
		& \cputime     & {0}        & {0}       & {0}         & {0}         & {0}              \\ 
		\hline
		
		\multirow{4}{*}{\rotatebox[origin=c]{0}{(2)} }  
		
		& $\rho$  &\textbf{0.78}& \textbf{0.10}& \textbf{0.93}& {0.99}& \textbf{0.99}\\ 
		& \rec{10} & \textbf{0.66}& \textbf{0.36}& \textbf{0.80}&{0.97}& {0.91} \\ 
		& \rec{1}  & \textbf{0.51}& \textbf{0.50}& \textbf{0.71}& \textbf{0.81}& \textbf{0.93}\\ 
		\hline

		\hline

		\multicolumn{7}{|c|}{ \textbf{50 Queries}} \\ \hline
		
		\multirow{4}{*}{\rotatebox[origin=c]{0}{(1)} }  
		
		& $\rho$  & 0.94& \textbf{0.20}& 0.71& {0.99}& {0.98}\\ 
		& \rec{10} & 0.77& 0.10& 0.39& 0.92& {0.91} \\ 
		& \rec{1}  & 0.57& 0& {0.36}& 0.45& 0.67\\ 
		& \cputime   & \textbf{0} & \textbf{0} & \textbf{0} & 2 & 3\\
		\hline
		
		\multirow{4}{*}{\rotatebox[origin=c]{0}{(2)} }  
		& $\rho$  & \textbf{0.95}& 0.10 & \textbf{0.93}& {0.99}& \textbf{0.99}\\ 
		& \rec{10} & \textbf{0.83}& \textbf{0.40}& \textbf{0.72}& \textbf{0.94}& {0.91} \\ 
		& \rec{1}  & \textbf{0.66}& \textbf{0.45}& {0.51}& \textbf{0.64}& \textbf{0.93}\\ 
		\hline

		\multicolumn{7}{|c|}{ \textbf{100 Queries}} \\ \hline
		
		\multirow{4}{*}{\rotatebox[origin=c]{0}{(1)} }  
		
		& $\rho$  & 0.94& \textbf{0.29}& 0.72& {0.99}& 0.98\\ 
		& \rec{10} & 0.75& 0.20& 0.41& 0.92& {0.91}\\ 
		& \rec{1}  & 0.53 & 0 & {0.36} & 0.45& 0.67\\ 
		& \cputime   & 0 & 2 & 3& 3& 10\\		\hline
		
		\multirow{4}{*}{\rotatebox[origin=c]{0}{(2)} }  
		& $\rho$  & \textbf{0.98}& 0.10& \textbf{0.81}& {0.99}& \textbf{0.99}\\ 
		& \rec{10} & \textbf{0.89}& \textbf{0.35}& \textbf{0.49}& \textbf{0.95}&{0.91}\\ 
		& \rec{1}  & \textbf{0.78} & \textbf{0.42}& \textbf{0.37} &\textbf{0.74}& \textbf{0.93}\\ 
		\hline
		\hline
		
		\multicolumn{7}{|r|}{(1): \svma $\qquad$ (2): \ahpa}\\ 
		
				\multicolumn{7}{|r|}{(a): \Hepatitis$\quad$ (b): \connect$\quad$ (c): \mushroom$\quad$ (d): \tfour$\quad$ (e):\retail}\\ \hline
		
	\end{tabularx}
	\caption{Qualitative evaluation of \svma{} vs \ahpa. \label{tab:active}}
\end{table}

\medskip
\noindent
\textbf{B) Comparing \ahpa{} with \svma{}.} 
Table~\ref{tab:active} presents a comparison between \svma{} and \ahpa{} on five datasets, reporting the Spearman correlation $\rho$, recall at $10\%$ and $1\%$ stopping criterion $T$ at 10, 50, and 100 queries, and the average latency time \cputime{} between two queries for \svma{} (never exceeding the latency bound of 0.1 seconds). The findings hold for the \randemu{}, \lexemu{}, and \chiemu{} functions.

The main observation from Table~\ref{tab:active} is that \ahpa{} outperforms \svma{}. Looking at the \Hepatitis{} dataset, for example, \ahpa{} achieves an accuracy of $78\%$ with only 10 queries, while \svma{} achieves $73\%$. At 50 queries, \ahpa{} achieves an accuracy of $95\%$, compared to $94\%$ for \svma{}. At 100 queries, \ahpa{} reaches $98\%$ accuracy, while \svma{} remains stable at $94\%$. In terms of recall, \ahpa{} discovers the most relevant patterns in the first 50K and 5K patterns (out of 500K) with accuracies of $78\%$ and $66\%$ over 10 queries, respectively, compared to $61\%$ and $44\%$ for \svma{}.

For the \retail{} dataset, \ahpa{} achieves high accuracy over 10 queries, which remains stable over the following 90 queries. \svma{} results are less impressive on \retail{} and particularly on the recall metric. However, the main difference is in the waiting time \cputime, where \svma{} can keep the user waiting for more than 10 seconds between two queries. The same observation can be made on the other datasets, where \svma{} can be hampered by overall waiting time, even with queries of size 2. In contrast, \ahpa{} shows an instantaneous behavior, taking less than 0.1 seconds between two queries. This represents a limitation in the use of \svma{}, especially when the learning is integrated into an interactive data mining process, where a reasonable latency time for a human user is around a few seconds \cite{Lallemand12}.

\newcommand{\err}{${\tt Err}$}
\subsection{Interactive learning results}
In this section, we address the last research question \RQ{4}, and conduct two experiments to evaluate the robustness and performance of our approach in the presence of human errors.

Our first experiment aims to evaluate the robustness of our approach in the face of human errors. To simulate situations where the user feedback may be incorrect, we randomly select a set of queries and swap the user preference with a probability \err{}. Specifically, a user who prefers pattern $P_i$ over $P_j$ will mistakenly prefer $P_j$ over $P_i$ with a probability \err{}.

Figure \ref{fig:active-err} compares the performance of \svma{} and \ahpa{} under different levels of human error, ranging from $0\%$ to $40\%$, and reports results averaged over $10$ runs on the whole set of datasets using \chiemu{} and submitting $20$ queries to the user. Our results show that \ahpa{} is quite stable and robust even with high levels of human error, up to $40\%$ ($8$ mistakes out of $20$). The overall correlation between the learned function and the user one ($\rho$) remains stable at $77\%$ without errors and drops only to $70\%$ under \err$=40\%$. However, \svma{} is stable only up to \err$=20\%$, and then its accuracy drops significantly, with $\rho$ decreasing from $71\%$ to $54\%$. In terms of recall, the decline is less than $2\%$ at \rec{10} and $8\%$ at \rec{1} under \ahpa{}, while \svma{} experiences a decline exceeding $10\%$ at both \rec{10} and \rec{1}. In terms of CPU time, we observe waiting times ranging from $6$ to $12$ seconds under \svma{}, whereas our approach never exceeds $0.02$ seconds between two queries.

These results suggest that our approach is robust and can handle user errors, while \svma{} is less robust and can suffer from significant drops in accuracy when faced with human errors. Additionally, our approach provides faster query processing times, which is critical in interactive data mining scenarios where human users expect near-instantaneous feedback.

\begin{figure}[h]
	\centering
	\includegraphics[width=\textwidth]{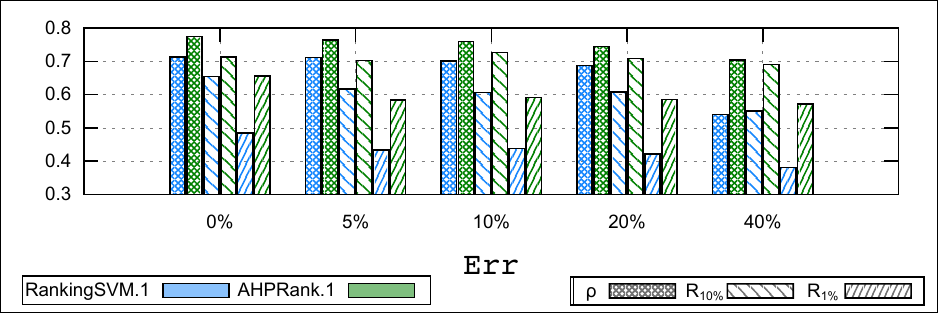} %
	
	\caption{ Learning under mistakes ($20$ queries).}%
	\label{fig:active-err}%
\end{figure}

\newcommand{\lexrand}{{\sc Lex2Rand}}

Our second experiment aims to evaluate the robustness of our approach when faced with an undecided user who begins with a set of preferences $A$ but ends up with preferences $B$. Here, the initial user preferences $A$ are biased towards the presented patterns until the user is more comfortable with preferences $B$.

To simulate such situations where the user's target function changes during the learning process, we conduct an experiment where we start with \lexemu{} as the target function to learn, and after $x$ queries, we switch to \randemu{}. This learning process takes $(20-x)$ more queries, and we refer to it as the \lexrand{} target. We choose to switch from \lexemu{} to \randemu{} because the two functions are linear. We compare \svma{} and \ahpa{} on \lexrand{}($x$) target for a total of $20$ queries, where $x\in{0,5,10}$. Note that with $x=0$, \lexrand{(0)} is equivalent to \randemu{}.

Figure \ref{fig:active-swap} shows the averaged results of $10$ runs on all datasets over $20$ queries. The main observation that we can draw is that \ahpa{} is stable even when faced with a changing linear function to learn, even if the user's preferences change halfway through the learning process (i.e., after $10$ queries). However, changing the target function during the learning process can significantly impact the accuracy of \svma{} (a decline of $44\%$ in terms of $\rho$).
Regarding CPU time, our approach is $50$ times faster than \svma{}.

\begin{figure}[h]
	\centering
	\includegraphics[width=\textwidth]{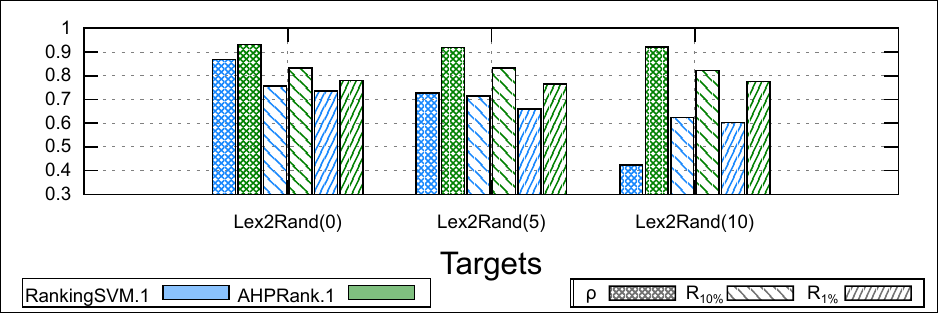} %
	
	\caption{ Learning under swaps ($20$ queries)}%
	\label{fig:active-swap}%
\end{figure}

To summarize, our research demonstrates that \ahp{} provides a favorable balance between ranking accuracy and running time. With a time complexity linear to the size of the training data, it achieves a high ranking accuracy and takes only a few seconds to run, making it a practical solution for industrial use.

\section{Conclusion}

In this paper, we presented a novel framework for learning pattern ranking functions using the AHP multicriteria decision making method. Our proposed \ahp{} algorithm can be applied in both passive and active learning modes, allowing the user to rank subsets of patterns according to their preferences. The learned weights are used to aggregate all the measures into a single ranking function using a weighted linear formulation, which was shown experimentally and statistically to be as close as possible to the user's ranking.
We applied our framework to the case study of Association Rules Mining and compared the results with state-of-the-art learning approaches. Our experiments showed that the \ahp{} algorithm can efficiently learn the ranking function and outperform existing methods in terms of both ranking accuracy and running time.
Overall, our proposed framework offers a practical and efficient approach for learning pattern ranking functions, with potential applications in a wide range of domains.

\bmhead{Acknowledgements}
Funded by the European Union through the AI4CCAM project (Trustworthy AI for Connected, Cooperative Automated Mobility) under grant agreement No 10107691. Views and opinions expressed are however those of the author(s) only and do not necessarily reflect those of the European Union or CINEA. Neither the European Union nor the granting authority can be held responsible for them.


\bibliography{reference}



\end{document}